\newcommand{\A}{\mathcal{A}} 
\newcommand{\C}{\mathcal{C}} 
\newcommand{\D}{\mathcal{D}}
\newcommand{\E}{\mathcal{E}} 
\newcommand{\F}{\mathcal{F}}
\renewcommand{\H}{\mathcal{H}}
\renewcommand{\L}{\mathcal{L}}
\newcommand{\M}{\mathcal{M}} 
\newcommand{\N}{\mathcal{N}}
\renewcommand{\P}{\mathcal{P}}
\newcommand{\Q}{\mathcal{Q}} 
\newcommand{\R}{\mathcal{R}}
\renewcommand{\S}{\mathcal{S}} 
\newcommand{\T}{\mathcal{T}}
\newcommand{\U}{\mathcal{U}}
\newcommand{\trace}{\pi}
\newcommand{\LTL}{{\sc ltl}\xspace}
\newcommand{\LTLf}{{\sc ltl}$_f$\xspace}
\newcommand{\LDLf}{{\sc ldl}$_f$\xspace}
\newcommand{\true}{\mathit{true}}
\newcommand{\false}{\mathit{false}}
\newcommand{\Next}{\raisebox{-0.27ex}{\LARGE$\circ$}}
\newcommand{\Wnext}{\raisebox{-0.27ex}{\LARGE$\bullet$}}
\renewcommand{\Until}{\mathop{\U}}
\newcommand{\Release}{\mathop{\R}}
\newcommand{\ls}{{\sf{lst}}}
\newtheorem{definition}{Definition}
\newtheorem{theorem}{Theorem}
\newtheorem{remark}{Remark}
\newtheorem{example}{Example}
\renewcommand{\LTL}{{LTL}\xspace}
\renewcommand{\LTLf}{{LTL}$_f$\xspace}
\renewcommand{\LDLf}{{LDL}$_f$\xspace}
\title{The Trembling-Hand Problem for LTL$_f$ Planning}
\author{
Pian Yu$^1$, Shufang Zhu$^1$, Giuseppe De Giacomo$^1$,
Marta Kwiatkowska$^1$, \And Moshe Vardi$^2$
\affiliations
$^1$Department of Computer Science, University of Oxford, UK \\
$^2$ Department of Computer Science, Rice University, USA \\
\emails
\{pian.yu, shufang.zhu, giuseppe.degiacomo, marta.kwiatkowska\}@cs.ox.ac.uk
\\ vardi@cs.rice.edu
}
\begin{document}

\maketitle

\setlength{\abovedisplayskip}{1pt}
\setlength{\belowdisplayskip}{1pt}

\begin{abstract}
Consider an agent acting to achieve its temporal goal, but with a ``trembling hand". 
In this case, the agent may mistakenly instruct, with a certain~(typically small) probability, actions that are not intended due to faults or imprecision in its action selection mechanism, thereby leading to possible goal failure. We study the trembling-hand problem in the context of reasoning about actions and planning for temporally extended goals expressed in Linear Temporal Logic on finite traces~(\LTLf), where we want to synthesize a strategy (aka plan) that maximizes the probability of satisfying the \LTLf goal in spite of the trembling hand. We consider both deterministic and nondeterministic~(adversarial) domains. We propose solution techniques for both cases by relying respectively on Markov Decision Processes and on Markov Decision Processes with Set-valued Transitions with \LTLf objectives, where the set-valued probabilistic transitions capture both the nondeterminism from the environment and the possible action instruction errors from the agent. We formally show the correctness of our solution techniques and demonstrate their effectiveness experimentally through a proof-of-concept implementation.
\end{abstract}



\section{Introduction}

In this paper, we study the \emph{trembling-hand}~(TH) problem in the context of reasoning about actions and planning for temporally extended goals expressed in Linear Temporal Logic on finite traces \LTLf \cite{DegVa13}.\footnote{All results presented here apply to other linear temporal logics on finite traces, such as \LDLf~\cite{DegVa13} and Pure Past LTL \cite{BonassiAtAlt23}, as long as the set of traces that satisfy a formula can be characterized by a regular language, i.e., by a deterministic finite state automaton.}
In a chess game, a player may have a trembling hand due to, e.g., nervousness, anxiety, or stress, which results in mistaken moves that were not intended. Likewise, an agent acting in an environment could mistakenly instruct a different action, e.g., due to faults, leading to possible goal failure.

The TH problem originates from Game Theory in Economics, see e.g.~\cite{Marchesi021}, referring to the situation where players erroneously select unintended moves with a small quantifiable probability.
This problem highlights the importance of introducing some form of resilience to these errors~\cite{VardiResilience} in the player strategies and has given rise to the well-known notion of Trembling Hand Perfect Equilibrium in Economics~\cite{Bielefeld1988}. 

Here, we study this problem in the context of reasoning about actions~\cite{Reiter2001} and planning~\cite{GeffnerBonet2013}.
Specifically, we consider an agent acting in a domain. At each state of the domain, when the agent instructs an action, with a certain probability, it can mistakenly instruct a different action. Notice that this uncertainty is on the agent decision-making capabilities, not on how the environment executes the instructed actions. To stress this point, we consider two settings: deterministic domains, where the environment has no choices in responding to agent actions, as in classical planning \cite{GeffnerBonet2013}; and nondeterministic domains, where the environment can adversarially respond to agent actions, as in Fully Observable Nondeterministic Domains (FOND), when considering strong plans \cite{CimattiPRT03,FOND,GeffnerBonet2013,DeGiacomoR18}. In both settings, we want to synthesize a strategy (aka plan) that guarantees to maximize the probability of fulfilling a temporally extended goal expressed in \LTLf, in spite of the adversarial response of the environment in the case of nondeterministic domains.

We devise solution techniques to solve the problem in the two settings by relying respectively on Markov Decision Processes (MDPs)~\cite{puterman2014markov} in the case of deterministic domains and on Markov Decision Processes with Set-valued Transitions~(MDPSTs)~\cite{trevizan2007planning,trevizan2008mixed} in the case of nondeterministic domains.
MDPs specify concrete probability values for each transition. MDPs with imprecise probabilities~(MDPIPs) \cite{white1994markov,satia1973markovian,givan2000bounded} and Uncertain MDPs~(UMDPs)~\cite{nilim2004robust,buffet2005robust,hahn2019interval} have been proposed for scenarios where the probability values are uncertain. MDPSTs constitute a restricted subclass of MDPIPs, which combine probabilistically quantifiable uncertainty with unquantifiable uncertainty (nondeterminism) in a unified framework. 
They are attractive because they admit a simplified Bellman equation compared to MDPIPs, UMDPs~\cite{trevizan2007planning}, and thus stochastic games.


In both settings, we consider \LTLf objectives instead of standard reachability. Note that MDPs with \LTLf objectives have been studied in \cite{BrafmanGP18,WellsKLKV21}. Instead, MDPSTs with \LTLf objectives are studied for the first time in this paper. We lift the definition of satisfying \LTLf objectives from MDPs to MDPSTs by defining the notion of robust strategy, and then an efficient value iteration algorithm is proposed for synthesizing an optimal robust strategy. We evaluate the effectiveness of the proposed solution techniques on a human-robot co-assembly problem, where the robot operates with a trembling hand, and demonstrate promising scalability. 

\subsection*{Related Work} 
Interestingly, the trembling-hand problem has never been specifically studied in reasoning about actions and planning, though some related work exists. For example, the classical work on reasoning with noisy sensors and effectors in Situation Calculus \cite{bacchus1999reasoning} is indeed related. There, sensors and effectors are considered to be noisy, so they introduce a stochastic element to be taken into account in reasoning. In particular, uncertainty on the effectors may be considered similarly to our trembling hand.  There is, however, an important distinction between the two: in our case, the uncertainty is on which action is actually \emph{instructed by the agent}; instead, in their case, the uncertainty is on how the action (perfectly instructed) is actually \emph{executed by the environment}. In other words, in our case, the uncertainty is on the agent, while in theirs, the uncertainty is on the environment, which has been more extensively studied since it corresponds to uncertainty in modeling the reaction of the world to agent moves.  In fact, there is a growing interest in forms of synthesis/planning that are resilient to errors in modeling the environment. For example, in~\cite{ZhuDeg22}, one could compute a maximally permissive strategy for the agent, allowing it to switch from one strategy to another while in execution, in case exceptional environment changes occur such that the predetermined strategy fails. In~\cite{AminofGLMR21,CiolekDPS20}, multiple models of the environment are considered to handle exceptions during planning, aiming to mitigate the intrinsic risk in a single environment model. In~\cite{AinetoGGRSS23}, a $k$-resilient strategy allows the agent to fail $k$ times maximally at a repeatedly occurring state. 

Nevertheless, in all these works, the focus is on errors/exceptions wrt expected environment behaviors. The trembling hand, on the other hand, is about errors in the agent behavior, and, as such, has not been much studied yet. Only two papers~\cite{WellsLKV20,WellsKLKV21} touched on this aspect, accounting for possible errors in robot decisions. Nevertheless, the environment considered is either deterministic or has probabilistic uncertainty. In contrast, our environment admits adversarial behaviors.



\section{Preliminaries}

We study the \emph{trembling hand}~(TH) problem in the context of planning for temporally extended goals expressed in \LTLf. We now briefly introduce the logic \LTLf,  deterministic and nondeterministic planning domains, and the notion of strategy~(aka plan) in a domain achieving an \LTLf goal.

\noindent\textbf{LTL$_f$.} 
\textit{Linear Temporal Logic on finite traces}~(\LTLf) is a specification language expressing temporal properties on finite, nonempty traces. In particular, \LTLf
shares syntax with \LTL, which is instead interpreted over infinite traces~\cite{Pnu77}. Given a set of atomic propositions $Prop$, \LTLf formulas are generated as follows: 
{\centerline{$\varphi ::= a \mid \varphi \wedge \varphi \mid \neg \varphi \mid  
 \Next \varphi \mid \varphi \mathop{\U} \varphi,$}}
where $a \in Prop$ is an \textit{atom}, $\Next$~(\emph{Next}) and $\mathop{\U}$~ (\emph{Until}) are temporal operators. 
We make use of standard Boolean abbreviations, e.g., $\vee$~(or) and $\rightarrow$~(implies), $\true$ and $\false$. In addition, we define the following abbreviations: \emph{Weak Next} $\Wnext \varphi \equiv \neg \Next \neg \varphi$, \emph{Eventually} $\Diamond \varphi \equiv \true \Until \varphi$ and \emph{Always} $\Box \varphi \equiv \false \Release \varphi$, where $\Release$ is for \emph{Release}.

A \textit{trace} $\trace = \trace_0\trace_1\ldots$ is a sequence of propositional interpretations~(sets), where for every $i \geq 0$, $\trace_i \in 2^{Prop}$ is the $i$-th interpretation of $\trace$. Intuitively, $\trace_i$ is interpreted as the set of propositions that are $true$ at instant $i$. We denote the last instant~(i.e., index) in a trace $\trace$ by $\ls(\trace)$.  
By $\trace^k = \trace_0 \cdots \trace_k$ we denote the \emph{prefix} of $\trace$ up to the $k$-th iteration, and $\trace^k = \epsilon$ denotes an empty trace if $k < 0$. 
We denote $\trace$ \emph{satisfies} $\varphi$ by $\trace \models \varphi$. The detailed semantics of \LTLf can be found in~\cite{DegVa13}.
It is also shown there that, for every \LTLf formula $\varphi$, one can construct a Deterministic Finite Automaton~(DFA) $Aut_\varphi = (2^{Prop}, \Q, q_0, \delta, acc)$, where $2^{Prop}$ is a finite alphabet, $\Q$ is a finite set of states, $q_0 \in \Q$ is the initial state, $\delta : \Q \times 2^{Prop} \rightarrow \Q$ is the transition function, and $acc$ is the set of accepting states, such that for every trace $\pi$ we have $\pi\models \varphi$ iff $\pi$ is accepted by $Aut_\varphi$.


%

\noindent\textbf{Deterministic Domain.} A deterministic domain is a tuple $\D = (\S, s_0, A, F_d, \L)$, where $\S$ is a finite set of states, $s_0 \in \S$ is an initial state, $A$ is a finite set of \emph{actions}, $F_d: \S \times A \mapsto \S$ is the deterministic transition function, 
where $s' = F_d(s,a)$ is the successor state after performing an \emph{applicable} action $a$ at $s$. We use $A(s)\subseteq A$ to denote the set of \emph{applicable} actions at $s$. $\L: \S \rightarrow 2^{Prop}$ is the labeling function, where $Prop$ is a finite set of propositions. Note that, compared to typical formulations of domains in planning~\cite{GeffnerBonet2013}, 
we are assuming that more than one state can have the same evaluation of the propositions~(fluents).


\noindent\textbf{Nondeterministic Domain.} 
A nondeterministic domain is a tuple $\N = (\S, s_0, A, F_n, \L)$, where $\S, s_0, A$, and $\L$ are defined as in deterministic domains, and $F_n: \S \times A \mapsto 2^{\S}$ is now a nondeterministic transition function such that $F_n(s,a)$ denotes the non-empty set of possible successor states that follow an applicable action $a \in A(s)$ in $s$.
It is worth noting that domains that are typically compactly represented, e.g., in Planning Domain Description Language~(PDDL)~\cite{PDDLbook}, can 
be encoded using a number of bits that is logarithmic in the number of states. 

\noindent\textbf{\LTLf Planning.} A \emph{path} of $\D$~(resp.~$\N$) is a finite or infinite sequence of alternating states and actions $\rho = s_0 a_0 s_1 a_1\cdots$, ending with a state if finite, where $s_0$ is the initial state, and $F_d(s_{i}, a_{i}) = s_{i+1}$~(resp.~$s_{i+1} \in F_n(s_{i}, a_{i})$) for all $i$ with $0 \leq i < |\rho|$, and $|\rho| \in \mathds{N} \cup \{\infty\}$. We denote by $\rho^k = s_0 a_0 s_1 a_1 \cdots s_k$ the finite prefix of $\rho$ up to the $k$-th alternation.
The sequence $\trace(\rho, \D) = \L(s_0)\L(s_1)\cdots$~(resp.~$\trace(\rho, \N) = \L(s_0)\L(s_1)\cdots$) over $Prop$ is called the \emph{trace} induced by $\rho$ over $\D$~(resp.~$\N$). ${\rm FPaths}$ denotes the set of all finite paths.
An \emph{agent strategy}~(or plan) is a function $\sigma_p: {\rm FPaths}\rightarrow A$ mapping a finite path on $\D$~(resp.~$\N$) to agent actions that are applicable at the last state of the finite path.
For nondeterministic domains, we assume that the nondeterminism is resolved by the environment, which acts according to an (unknown) strategy as an (adversarial) antagonist of the agent. Environment strategies are functions of the form  $\gamma_p: {\rm FPaths} \times A \rightarrow \S$, which need to comply with the domain, in the sense that given a finite path $\rho \in {\rm FPaths}$ and an action $a \in A(\ls(\rho))$ it must be the case that $\gamma_p(\rho)\in F_n(\ls(\rho),a)$.  Note that, in the case of deterministic domains, this constraint forces the environment to have only one strategy. Given an agent strategy $\sigma_p$ and an environment strategy $\gamma_p$, there is a unique path $\rho(\sigma_p, \gamma_p) = s_0 a_0 s_1 a_1\cdots$ generated by $\sigma_p$ and $\gamma_p$, where $s_0$ is the initial state and for every $i \geq 0$ it holds that $a_i = \sigma_p(\rho^{i})$ and $s_{i+1} = \gamma_p(\rho^i, a_i)$. Sometimes, for simplicity, we write $\rho$ instead of $\rho(\sigma_p, \gamma_p)$, when it is clear in the context.

An agent strategy $\sigma_p$ enforces an \LTLf $\varphi$ in a domain $\D$~(resp.~$\N$) if for every environment strategy $\gamma_p$, the infinite path $\rho(\sigma_p, \gamma_p)$ contains a finite prefix $\rho^i$ such that the finite trace $\pi(\rho^i, \D)\models\varphi$~(resp.~$\pi(\rho^i, \N)\models\varphi$). \LTLf planning concerns computing such an agent strategy $\sigma_p$, if one exists.

\section{TH in Deterministic Domains}

We begin investigating the \emph{trembling-hand} problem for \LTLf planning by focusing on the case of deterministic domains, where the environment has only one strategy, i.e., following the transitions of the domain. Hence, the only uncertainty is the stochastic one, arising from the ``trembling hand". In the next section, we consider the case of nondeterministic domains, where the environment employs adversarial strategies.

\subsection{Problem formulation}
The ``trembling hand" refers to the agent intending to instruct a certain action but, by mistake, instructing a different action with a~(small) quantified probability. This probability only depends on the domain state where the action is instructed~(and then performed).
Notice that, here, the environment is fully deterministic, and hence, once the action is instructed, it will be executed (without any error) in a deterministic way.
The problem that we want to address is to maximize the probability of achieving a given \LTLf objective in a deterministic domain in spite of the~(state dependent) action-instruction errors due to the ``trembling hand''.


We formalize the action-instruction errors as follows. 
Let $\D = (\S, s_0, A, F_d, \L)$ be a deterministic domain, $s \in \S$ a domain state, and $a\in A(s)$ an applicable action at $s$.
We denote by $err(s, a)\in {\rm Dist}(A(s))$ the probability distribution of instructing an action $a'$ instead of $a$ in state $s$. For instance, suppose the set of applicable actions at $s$ is such that $A(s) = \{a, a', a''\}$ and $err(s, a) = [0.9, 0.04, 0.06]$. This means that,  when the agent intends to instruct action $a$ at state $s$, then with probability $0.04$ and $0.06$, it may instruct actions $a'$ and $a''$, respectively. Let $\E = \{err(s, a): (s, a) \in \S \times A\}$ be the set of the (state-dependent) action-instruction errors caused by the ``trembling hand''. 
The set of actions that could be instructed when the agent intends to instruct action $a$ at state $s$ is the \emph{support set} of $err(s, a)$, denoted by $supp(s, a) \subseteq A(s)$. In this example, $supp(s,a) = \{a, a', a''\}$.


Recall that, without a trembling hand, executing an agent strategy $\sigma_p$ in a deterministic domain $\D$ results in a unique path. Yet, in the presence of a ``trembling hand", i.e., action-instruction errors $\E$, we get perturbed paths, where the actually instructed action may differ from the intended one with a probability following $\E$.


\begin{definition}[Perturbed path in $\D$]\label{def:pert_path}
    Let $\D$ be a deterministic domain, $\sigma_p$ an agent strategy, and $\E$ a set of action-instruction errors. A \emph{perturbed path} in $\D$ wrt $\E$ is a sequence of triples $\rho(\sigma_p,\E) = (s_0, a_0, a'_0)(s_1, a_1, a'_1)\cdots$,  
    where for every $i \geq 0$, $a_i= \sigma_p(\rho^{i})$~(the intended action)$, a'_i \in supp(s_i, a_i)$~(the actually instructed action), and $s_{i+1} = F_d(s_i, a'_i)$. The set of all perturbed paths in $\D$ wrt $\sigma_p$ and $\E$ is denoted by $\Phi^{\sigma_p, \E}$.
\end{definition}

Given a deterministic domain $\D$, an \LTLf formula $\varphi$, and a set of action-instruction errors $\E$, the probability of $\sigma_p$ enforcing $\varphi$ in $\D$ wrt $\E$
is defined as ${\rm Pr}_{\D}^{\sigma_p, \E}(\varphi) := {\rm Pr}_{\D}(\{\rho \in \Phi^{\sigma_p, \E} \mid \pi(\rho^k, \D) \models \varphi \text{ for some } k \geq 0\}).$


\begin{definition}[TH problem for \LTLf planning in $\D$]\label{DD}
The problem is a tuple $\P^d = (\D, \varphi, \E)$, where $\D$ is a deterministic domain, $\varphi$ is an \LTLf formula, and $\E$ is a set of action-instruction errors. Solving $\P^d$ consists in synthesizing an agent strategy $\sigma_p^*$ that maximizes the probability of enforcing $\varphi$ in $\D$ wrt $\E$, i.e., an \emph{optimal strategy} for $\P^d$, that is:
$\sigma_p^* = \arg\max_{\sigma_p} {\rm Pr}_{\D}^{\sigma_p, \E}(\varphi)$.
\end{definition}

In the following, an example is given to demonstrate a perturbed transition in a deterministic domain $\D$.

\begin{example}\label{example1}
    Consider a robot assembly problem, where the robot aims to assemble an arch using $N$ blocks. In Figure~\ref{Fig:arch}, the goal configurations for $N~(2 \leq N \leq 6)$ blocks are depicted (which are used later in Section 5). Let $OBJ = \{Obj_i \mid i\in \{1, \cdots, N\}\}$ be the set of blocks and $LOC = \{L_j \mid j\in \{0, \cdots, M\}\}$ be the set of locations, where $L_0$ represents the storage. Initially, all the blocks are stored in the storage.

    During assembly, the robot can perform \emph{move} actions to relocate blocks. The  set of robot actions is $A = \{(Obj_i, L_j): Obj_i\in OBJ, L_j\in LOC\} \cup \{\mbox{do-nothing}\}$, where $(Obj_i, L_j)$ means move block $i$ to location $j$. Due to the ``trembling hand" (which may be caused by drifting), the robot’s action is subject to uncertainty. For instance, if the robot intends to move block $i$ to location $j$, there exists a probability that it may mistakenly move a different block $i' \neq i$ or inaccurately place it in location $j' \neq j$. In addition, the probability of errors varies for different actions. For instance, if the robot chooses ${\textsc{Do-nothing}}$, one can safely assume that the probability of error is 0. However, if the robot chooses to \emph{move}, we assume a positive probability of error, leading to perturbed transitions resulting in perturbed paths.

    Figure \ref{Fig:mdp} shows a perturbed transition example in this case. The dashed arrow shows the intended action and the solid arrows represent the set of actions that may be actually instructed with their respective probabilities.
\end{example}
 \begin{figure}[H]
\centering
\includegraphics[width=0.25\textwidth]{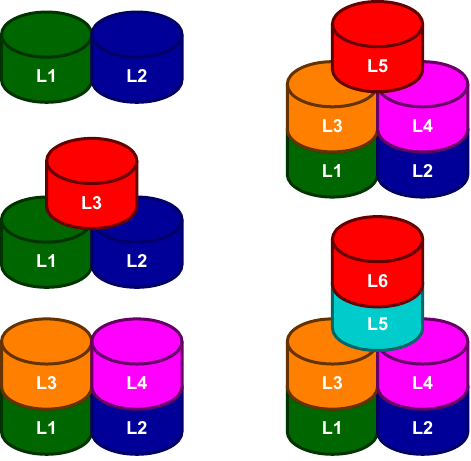}
	\caption{An arch. Left: 2, 3, and 4 blocks. Right: 5 and 6 blocks. }
	\label{Fig:arch}
\end{figure}

\begin{figure}[H]
\centering
\includegraphics[width=0.32\textwidth]{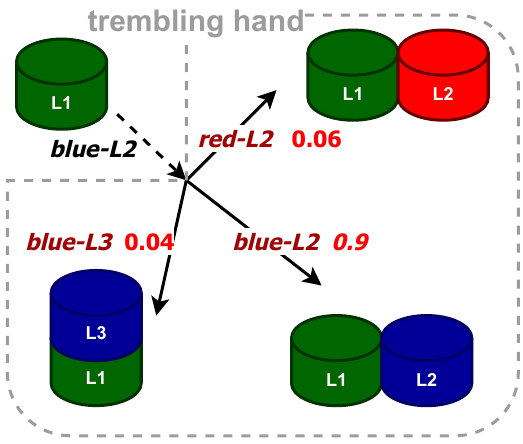}
\caption{A perturbed transition in a det. domain.}
\label{Fig:mdp}
\end{figure}

\subsection{MDPs with \LTLf objectives}
Solving $\P^d = (\D, \varphi, \E)$ requires a modeling technique that is able to capture the action-instruction errors $\E$ in a deterministic domain. To this end, we first review \emph{Markov Decision Processes~(MDPs)}, which are used in our solution technique.

\emph{MDPs} allow action choice in each state, and each state action transition is a probability distribution on successor states, which provides a natural way of capturing the probability of the agent mistakenly taking a different action in a deterministic domain. 
Following \cite{puterman2014markov}, 
an MDP is a tuple $\M = (\S, s_0, A, \T,  \L)$, where $\S$ is a finite set of states, $s_0\in \S$ is the initial state, $A$ is a finite set of actions, $\T: \S \times A \times \S \rightarrow [0, 1]$ is the probabilistic transition function, and $\L: \S \to 2^{Prop}$ is the proposition labelling function. Analogously to planning domains, for each $s\in \S$, the set of actions \emph{applicable} at state $s$ is denoted by $A(s)$.  

Denote by $\xi=s_0 a_0 s_1 a_1\cdots$ a path and by ${\rm FPaths}$ the set of all finite paths of $\M$.
In this work, we focus on \emph{deterministic} agent strategies for MDPs, instead of \emph{randomized}. A strategy $\sigma_m$ of $\M$ is a function $\sigma_m: {\rm FPaths} \to A$ such that, for each $\xi\in {\rm FPaths}$, $\sigma_m(\xi)\in A({\ls}(\xi))$, where ${\ls}(\xi)$ is the last state of $\xi$. We denote by $\Xi^{\sigma_m}$ the set of all probably infinite paths of $\M$ generated by $\sigma_m$.

Given an MDP $\M = (\S, s_0, A, \T,  \L)$ and a set of goal states $G \subseteq \S$, the probability of an agent strategy $\sigma_m$ enforcing $G$ in $\M$ is defined as ${\rm Pr}_{\M}^{\sigma_{m}}(G) := {\rm Pr}_{\M}(\{\xi\in \Xi^{\sigma_m} \mid \ls(\xi^k) \in G \text{ for some } k \geq 0\})$.
Computing an optimal strategy $\sigma^*_m$ that maximizes the probability of enforcing $G$ is the \emph{reachability problem} over $\M$.  Analogously, an agent strategy can also enforce a temporal objective on an MDP. Given an MDP $\M$ and an \LTLf formula $\varphi$, the probability of an agent strategy $\sigma_m$ enforcing $\varphi$ in $\M$ is defined as ${\rm Pr}_{\M}^{\sigma_{m}}(\varphi) := {\rm Pr}_{\M}(\{\xi\in \Xi^{\sigma_m} \mid \pi(\xi^k, \M) \models \varphi \text{ for some } k \geq 0\})$.
The problem of MDP with \LTLf objective is to compute an optimal strategy $\sigma^*_m$, which maximizes the probability of enforcing $\varphi$ in $\M$~\cite{baier2008principles}.

\subsection{Solution technique}\label{sol:THDet}


We now present our solution technique to synthesize an optimal agent strategy $\sigma_p^*$ for $\P^d = (\D, \varphi, \E)$, which aims to maximize the probability of enforcing $\varphi$ in $\D$ in spite of $\E$. The key idea is to reduce $\P^d$ to an MDP with an \LTLf objective. Intuitively, the MDP provides a probabilistic abstraction of instructing mistaken actions in the domain due to the trembling hand. Hence, the MDP has the same states as the domain, retains its original action choices, and incorporates the probability of action-instruction errors in its transitions. 

\noindent\textbf{Probabilistic abstraction.} 
We define an MDP $\mathcal{M} = (\S, s_0, A, \T, \L)$ from $\P^d = (\D, \varphi, \E)$ as follows. $\S, s_0, A$, and $\L$ are the same as in $\D$. In order to construct the probabilistic transition function $\T: \S \times \A \times \S \to [0, 1]$, note that $\T(s, a, s')$ gives the probability of transitioning from state $s$ to $s'$ on action $a$. Assuming no ``trembling hand'' errors,  we have that $\T(s, a, s') = 1$ if $s' = F_d(s, a)$ and $\T(s, a, s') = 0$ otherwise. Due to ``trembling-hand'' errors, however, unintended actions (e.g., $a'$) may be instructed with some probability (given by $err(s, a)(a')$), thus resulting in a different successor state $s'' = F_d(s, a')$. Following this observation, we construct $\mathcal{T}$ as follows:
    \begin{equation*}
    \mathcal{T}(s, a, s') = \begin{cases}
        err(s, a)(a'), & \text{if $a' \in A(s), s'=F_d(s, a'),$}\\
        0, & \text{otherwise.}       
    \end{cases}   
\end{equation*}

The TH problem for \LTLf planning in deterministic domains is now reduced to an MDP with an \LTLf objective. 

\begin{theorem}\label{thm:toMDP}
Let $\P^d = (\D, \varphi, \E)$ be a TH problem defined in Def. \ref{DD}, and $\M$ the constructed MDP described above. An optimal strategy for $\M$ with $\varphi$ is an optimal strategy for $\P^d$ and vice versa, that is: 
  \begin{equation*}
        \sigma_p^* =  \arg\max_{\sigma_m}\{{\rm Pr}_{\M}^{\sigma_m}(\varphi)\},
\end{equation*}
where $\sigma_p^*$~(optimal strategy for $\P^d$) is given in Def. \ref{DD}.
\end{theorem}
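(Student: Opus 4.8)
The plan is to exhibit a probability- and trace-preserving correspondence between the perturbed paths of $\D$ (under $\sigma_p$ and $\E$) and the paths of $\M$ (under the matching strategy), and then read off equality of the optimal values, which forces the claimed identification of optimizers. First note that, since $\M$ and $\D$ share $\S$, $s_0$, $A$, $A(\cdot)$ and $\L$, a finite path $\xi = s_0 a_0 s_1 \cdots s_k$ of $\M$ and the ``observable prefix'' $s_0 a_0 s_1 \cdots s_k$ of a perturbed path of $\D$ (recording the intended actions $a_i$ and the visited states $s_i$) are the same type of object; hence an agent strategy $\sigma_p$ for $\P^d$ and a strategy $\sigma_m$ for $\M$ range over the same domain, and I take the correspondence to be the identity $\sigma_m := \sigma_p$, which is a bijection between the two strategy classes (consistent with the statement, which already writes $\sigma_p^\ast$ as an $\arg\max$ over $\sigma_m$).

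\textbf{Path/measure correspondence.} Fix $\sigma_p$ and its image $\sigma_m$, and define $h\colon \Phi^{\sigma_p,\E} \to \Xi^{\sigma_m}$ by $h\big((s_0,a_0,a_0')(s_1,a_1,a_1')\cdots\big) = s_0 a_0 s_1 a_1 \cdots$, i.e.\ forget the actually-instructed actions $a_i'$. By Def.~\ref{def:pert_path}, along a perturbed path the states $s_i$ and the intended actions $a_i = \sigma_p(\rho^i)$ are deterministic functions of the sequence $(a_0',a_1',\dots)$ of instructed actions; the only stochastic choices are the $a_i' \in supp(s_i,a_i)$, each drawn according to $err(s_i,a_i)$. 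Hence the cylinder of perturbed paths extending a given finite triple-sequence $(s_0,a_0,a_0')\cdots(s_k,a_k,a_k')$ has ${\rm Pr}_\D$-probability $\prod_{i=0}^{k} err(s_i,a_i)(a_i')$, and the $h$-preimage of the $\M$-cylinder of $s_0 a_0 s_1 \cdots s_{k+1}$ is the disjoint union of these triple-cylinders over all choices $(a_0',\dots,a_k')$ with $F_d(s_i,a_i')=s_{i+1}$; summing yields
\begin{equation*}
{\rm Pr}_\D\big(h^{-1}(\mathrm{cyl}(s_0 a_0 \cdots s_{k+1}))\big) \;=\; \prod_{i=0}^{k}\;\sum_{\substack{a_i' \in A(s_i)\\ F_d(s_i,a_i')=s_{i+1}}} err(s_i,a_i)(a_i') \;=\; \prod_{i=0}^{k}\T(s_i,a_i,s_{i+1}),
\end{equation*}
which is exactly the ${\rm Pr}_\M$-probability of that $\M$-cylinder (the inner sum being the intended reading of $\T$ when several $a_i'$ collapse to the same successor). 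Since the cylinders generate the relevant $\sigma$-algebras and form a $\pi$-system, the standard Carathéodory / $\pi$--$\lambda$ extension argument shows $h$ is measurable and pushes ${\rm Pr}_\D$ forward to ${\rm Pr}_\M$.

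\textbf{Traces, objective value, and optimality transfer.} For any perturbed path $\rho$ and $k \geq 0$, the induced trace $\pi(\rho^k,\D) = \L(s_0)\cdots\L(s_k)$ depends only on the visited states, which $h$ preserves, so $\pi(\rho^k,\D) = \pi(h(\rho)^k,\M)$ for all $k$; by the DFA characterization of \LTLf, $\pi(\rho^k,\D)\models\varphi$ iff $\pi(h(\rho)^k,\M)\models\varphi$. Therefore $\{\rho \in \Phi^{\sigma_p,\E} : \pi(\rho^k,\D)\models\varphi \text{ for some } k\}$ equals $h^{-1}$ of $\{\xi \in \Xi^{\sigma_m} : \pi(\xi^k,\M)\models\varphi \text{ for some } k\}$, and by the previous step ${\rm Pr}_\D^{\sigma_p,\E}(\varphi) = {\rm Pr}_\M^{\sigma_m}(\varphi)$. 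Since $\sigma_p \mapsto \sigma_m$ is a bijection preserving this value pointwise, the two suprema over strategies coincide and $\sigma_p^\ast$ attains one iff the corresponding $\sigma_m^\ast (= \sigma_p^\ast)$ attains the other; existence of an optimal deterministic strategy on the MDP side is the standard fact about MDPs with \LTLf objectives (reducible via $Aut_\varphi$ to reachability). This is precisely the asserted equivalence.

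\textbf{Main obstacle.} The crux is the measure-preserving claim: correctly setting up the two probability spaces via cylinder sets and verifying that $h$ transports ${\rm Pr}_\D$ to ${\rm Pr}_\M$, in particular handling the many-to-one behaviour where distinct instructed actions $a'$ lead to the same successor state (which dictates the summation reading of $\T$ and makes $h$ merely surjective, not bijective). The strategy identification, the trace preservation, and the transfer of optimality are then routine bookkeeping once this correspondence is established.
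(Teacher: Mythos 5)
Your proof is correct and follows essentially the same route as the paper's own: identify $\sigma_m$ with $\sigma_p$ by construction, identify the perturbed paths of $\D$ with the paths of $\M$, and match the per-step transition probabilities $err(s,\sigma_p(s))(a')=\T(s,\sigma_m(s),s')$ so that the satisfaction probabilities and hence the optimizers coincide. You are in fact more careful than the paper on two points it elides --- the correspondence of paths is a many-to-one projection $h$ rather than the literal set equality $\Phi^{\sigma_p,\E}=\Xi^{\sigma_m}$ the paper asserts, and $\T(s,a,s')$ must be read as the \emph{sum} of $err(s,a)(a')$ over all $a'$ with $F_d(s,a')=s'$ when distinct instructed actions collapse to the same successor --- and your cylinder-set pushforward argument handles both correctly.
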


\begin{proof}
First, we observe that $\sigma_p = \sigma_m$, by construction. Second, one can derive, according to Def.~1 that 
$\Phi^{\sigma_p, \E} = \Xi^{\sigma_{m}}$ if $\sigma_p = \sigma_m$. 
Finally, given a state $s$ and an intended action $a = \sigma_p(s)$ in $\P^d$, one has that,  with probability $err(s, \sigma_p(s))(a')$, the agent may actually instruct a different action $a'$ and then transit to $s' = F_d(s, a')$. Denote by ${\rm Pr}_{\P^d}(s' | s, \sigma_p(s))$ the probability of transiting to state $s'$ from the state $s$ with action $\sigma_p(s)$. We obtain that if $\sigma_p(s) = \sigma_m(s)$, then ${\rm Pr}_{\P^d}(s' | s, \sigma_p(s)) = err(s, \sigma_p(s))(a') = \T(s, \sigma_{m}(s), s').$
Hence, the conclusion follows.
\end{proof}

Thm.~\ref{thm:toMDP} allows us to utilize existing algorithms for MDPs with \LTLf objectives to solve the TH problem for \LTLf planning in deterministic domains. The common approach 
is by reduction to the reachability problem of an MDP~\cite{baier2008principles,WellsKLKV21}. 
More specifically, given an MDP $\M$ with \LTLf objective $\varphi$, we first construct the corresponding DFA $Aut_\varphi$ of the \LTLf formula $\varphi$, then construct the product MDP $\mathcal{M}^{\times}$ of $\M$ and $Aut_\varphi$. In this case, computing an optimal strategy for $\M$ with $\varphi$ reduces to the reachability problem over $\mathcal{M}^{\times}$, where the goal states $G$ are those in $\M^\times$ that consist of the accepting states of $Aut_\varphi$. The reachability problem of $\M^\times$ wrt $G$ can be solved via value iteration, strategy iteration, or linear programming~\cite{altman1998constrained,WellsKLKV21}. Due to the cross product, every finite path $\rho \in {\rm FPaths}_\D$ such that $\rho = s_0a_0s_1a_1 \cdots s_{k+1}$, where ${\rm FPaths}_\D$ denotes the finite paths on $\D$, corresponds to a finite path $\rho^\times$ on $\M^\times$, where $\rho^\times = (s_0,q_0)a_0(s_1,q_1)a_1 \cdots(s_k,q_k)a_k(s_{k+1},q_{k+1})$. 
Therefore, every strategy $\sigma^*_m$ for $\mathcal{M}^{\times}$ wrt $G$ induces a strategy for $\P^d = (\D, \varphi, \E)$ as follows: $\sigma^*_p(\rho) = \sigma^*_m(\rho^\times)$ for $\rho \in {\rm FPaths}_\D$. Together with Thm.~\ref{thm:toMDP}, the following theorem is an immediate result.

\begin{theorem}\label{thm:toMDPfinal}
Let $\P^d = (\D, \varphi, \E)$ be a TH problem defined in Def. \ref{DD}, $\M^\times = \M \times Aut_{\varphi}$ the constructed product MDP, and $G$ the set of goal states. We have that the computed optimal strategy $\sigma^*_m$ of $\M^\times$ with reachability goal $G$ induces an optimal strategy $\sigma^*_p$ for $\P^d$.
\end{theorem}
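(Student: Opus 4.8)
The plan is to chain the desired statement onto Theorem~\ref{thm:toMDP} through the standard automata-theoretic reduction of an \LTLf objective to reachability on a product. Theorem~\ref{thm:toMDP} already gives that an optimal strategy for $\M$ with objective $\varphi$ is optimal for $\P^d$ and vice versa (via $\sigma_p = \sigma_m$), so it suffices to prove two things: (i) every strategy $\sigma_m^\times$ on $\M^\times$ induces a strategy $\sigma_m$ on $\M$ and conversely, and (ii) under this correspondence ${\rm Pr}_{\M^\times}^{\sigma_m^\times}(G) = {\rm Pr}_{\M}^{\sigma_m}(\varphi)$. Then the $\arg\max$ over strategies of one side coincides with that of the other, and composing with the map $\sigma_p^*(\rho) = \sigma_m^*(\rho^\times)$ described just before the theorem statement yields the claim.

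First I would make the path correspondence precise. Since $Aut_\varphi$ is deterministic, for any finite path $\rho = s_0 a_0 s_1 \cdots s_{k+1}$ of $\M$ (equivalently of $\D$, as $\M$ and $\D$ share states and actions) there is a unique run $q_0 q_1 \cdots q_{k+1}$ of $Aut_\varphi$ on the induced trace $\pi(\rho, \M) = \L(s_0)\cdots\L(s_{k+1})$, with $q_{i+1} = \delta(q_i, \L(s_i))$ following the indexing convention fixed in the Preliminaries. Because the transitions of $\M^\times$ force the second component to be this deterministic function of the history, the map $\rho \mapsto \rho^\times$ is a bijection between ${\rm FPaths}_\M$ and ${\rm FPaths}_{\M^\times}$, and the transition probabilities of $\M^\times$ are exactly those of $\M$ on the first component. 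Hence a strategy $\sigma_m^\times$ on $\M^\times$ and a strategy $\sigma_m$ on $\M$ determine one another by $\sigma_m(\rho) = \sigma_m^\times(\rho^\times)$, and this extends to a measure-preserving bijection between $\Xi^{\sigma_m}$ and the infinite paths of $\M^\times$ generated by $\sigma_m^\times$, since the corresponding cylinder probabilities agree term by term.

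Second I would verify the objective is preserved along the correspondence: by the defining property of $Aut_\varphi$, for a path $\rho$ of $\M$ we have $\pi(\rho^k, \M) \models \varphi$ iff the DFA run on that prefix ends in $acc$, i.e.\ iff $\ls((\rho^\times)^k)$ has its DFA component in $acc$, i.e.\ iff $\ls((\rho^\times)^k) \in G$ by the definition of the goal set. Therefore the event $\{\xi \in \Xi^{\sigma_m} : \pi(\xi^k, \M)\models\varphi \text{ for some } k \geq 0\}$ maps, under the measure-preserving bijection, onto $\{\xi' : \ls((\xi')^k) \in G \text{ for some } k \geq 0\}$, giving ${\rm Pr}_{\M}^{\sigma_m}(\varphi) = {\rm Pr}_{\M^\times}^{\sigma_m^\times}(G)$. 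Taking suprema over strategies on both sides, an optimal reachability strategy $\sigma_m^*$ on $\M^\times$ maps to a $\varphi$-optimal strategy on $\M$, which by Theorem~\ref{thm:toMDP} is an optimal strategy for $\P^d$; reading this back through $\sigma_p^*(\rho) = \sigma_m^*(\rho^\times)$ on ${\rm FPaths}_\D$ closes the argument.

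The only place that needs care rather than bookkeeping is the strategy-space/memory mismatch: reachability on $\M^\times$ admits a positional optimal strategy, but pushed back to $\M$ (and thence to $\D$) it is only finite-memory, with the DFA state serving as memory. The way to handle this cleanly is to phrase the entire correspondence at the level of arbitrary history-dependent strategies, as above, so that the bijection $\rho \leftrightarrow \rho^\times$ makes the two strategy classes literally isomorphic and hence forces equal optimal values; one never has to reason about the amount of memory directly. A secondary point to get right is the indexing convention relating the length of a prefix $\rho^k$, the length of its induced trace, and the length of the corresponding DFA run, so that ``some prefix satisfies $\varphi$'' and ``some prefix reaches $G$'' line up without an off-by-one error — this is purely a matter of matching the conventions already fixed in the Preliminaries and in the paragraph preceding the theorem.
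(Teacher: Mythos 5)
Your proposal is correct and follows essentially the same route as the paper, which presents this theorem as an immediate consequence of the standard product-MDP reduction (the path correspondence $\rho \mapsto \rho^\times$ and the induced strategy $\sigma^*_p(\rho) = \sigma^*_m(\rho^\times)$) combined with Theorem~\ref{thm:toMDP}. You simply make explicit the measure-preserving bijection and the event equivalence that the paper leaves implicit by citation to the standard literature.
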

\section{TH in Nondeterministic Domains}

We now turn to the case in which the domain is nondeterministic, i.e., the environment can choose its strategy adversarially. Hence, in planning, we have to overcome two forms of uncertainty: the stochastic uncertainty from the trembling hand and the adversarial uncertainty from the environment. 

\subsection{Problem formulation}

Consider the case where the agent acts in a nondeterministic (adversarial) domain. We assume the nondeterminism is unquantifiable, so it is adversarial without a probabilistic behavior. In this setting, we want to synthesize an agent strategy that maximizes the probability of achieving its goal in spite of the adversarial behavior of the environment and the~(small) probability of instructing wrong actions at every step. In other words, the agent seeks a maxi-min strategy, that is, a strategy that maximizes the minimal probability across all possible environment strategies. We find this case to be particularly interesting, since it combines the probabilistic aspects of the previous case's action-instruction errors and the environment's adversarial nondeterminism.

We first define perturbed paths of an agent acting in a nondeterministic domain $\N$ with action-instruction errors $\E$, i.e., the trembling hand, and adversarial environment behaviors.

\begin{definition}[Perturbed paths in $\N$]\label{def:pert_path_N}
    Let $\N$ be a nondeterministic domain, $\sigma_p$ an agent strategy, $\gamma_p$ an environment strategy, and $\E$ a set of action-instruction errors. A \emph{perturbed path} in $\N$ wrt $\E$ is a sequence of triples $\rho(\sigma_p, \gamma_p, \E) = (s_0, a_0, a'_0)(s_1, a_1, a'_1)\cdots$. We denote by $\rho'$ the projection of $\rho$ by considering only the states $s_i$ and actually instructed actions $a'_i$. It holds that, for every $i \geq 0$, $a_i\in \sigma_p({\rho'}^{i})$~(the intended action)$, a'_i \in supp(s_i, a_i)$~(the actually instructed action), and $s_{i+1} = \gamma_p({\rho'}^{i}, a'_i)$. The set of all perturbed paths in $\N$ wrt $\sigma_p, \gamma_p, \E$ is denoted by $\Phi^{\sigma_p, \gamma_p, \E}$.
\end{definition}


The probability of an agent strategy $\sigma_p$ enforcing $\varphi$ considering environment strategy $\gamma_p$ and the action-instruction error $\E$ is denoted by 
${\rm Pr}_{\N}^{\sigma_p, \gamma_p, \E}(\varphi)$.

\begin{definition}[TH problem for \LTLf planning in $\N$]\label{ND}
The problem is a tuple $\P^n = (\N, \varphi, \E)$, where $\N$ is a nondeterministic domain, $\varphi$ is an \LTLf formula, and $\E$ is a set of action-instruction errors. Solving $\P^n$ consists in synthesizing an agent strategy $\sigma^*_p$ that maximizes the probability of enforcing $\varphi$ in $\N$ with $\E$ in spite of adversarial strategies of the environment, i.e., an \emph{optimal strategy} for $\P^n$, that is: 
\begin{equation*}
\sigma^*_p = \arg\max_{\sigma_p}\min_{\gamma_p} {\rm Pr}_{\N}^{\sigma_p, \gamma_p, \E}(\varphi).
\end{equation*}
\end{definition}

\begin{example}\label{example2}
Consider the human-robot co-assembly problem adapted from \cite{he2019efficient}, where the robot aims to assemble an arch along with a human in a shared workspace. During assembly, the robot can perform actions to \emph{relocate} blocks, and the human may perform moves to intervene. The robot has a ``trembling hand" as described in Example \ref{example1}. Compounding this issue, the robot must ensure that the arch is successfully built, considering the human involvement. 

\begin{figure}[H]
    \centering
    \includegraphics[width=0.5\textwidth]{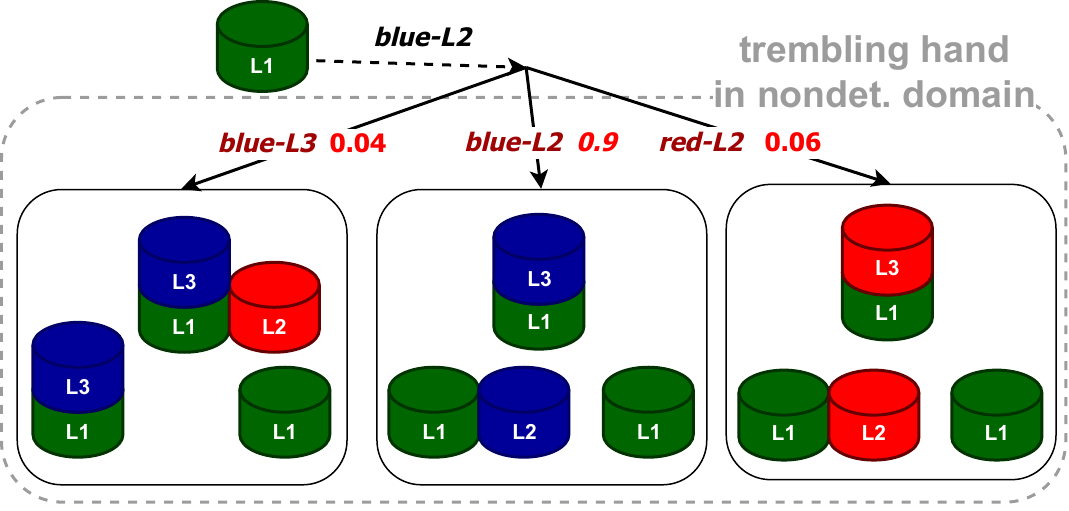}
    \caption{A perturbed transition in a nondet. domain. }
    \label{Fig:mdpst}
\end{figure}

At each step, the robot instructs an action to relocate a block, though with a trembling hand, and then the human can react by moving blocks among locations. Human movements are not controllable, thus introducing nondeterminism into the consequences of instructed robot actions. Figure~\ref{Fig:mdpst} shows a perturbed transition example of a robot working in such a nondeterministic domain with a trembling hand. Note that each intended action~(dashed arrow) corresponds to a set of actions~(solid arrow) that may be actually instructed, together with their respective probabilities. Furthermore, each actually instructed action leads to a set of possible changes due to uncontrollable human intervention.


\end{example}

\subsection{MDPSTs with \LTLf objectives}
We now introduce \emph{Markov Decision Processes with Set-valued Transitions (MDPSTs)}, which combine probabilistically quantifiable uncertainty with unquantifiable uncertainty~(nondeterminism), hence providing a natural probabilistic abstraction of 
an agent acting with a trembling hand in a nondeterministic domain. Note that MDPSTs are favorable due to their simplified Bellman equation compared to MDPIPs, 
UMDPs, and thus stochastic games. This distinction will be further clarified in Section 4.3.

An \emph{MDPST} is a tuple $\M_N = (\S, s_0, A, \F, \T_N, \L)$, where $\S, s_0, A$, and $\L$ are defined as for MDPs, and
\begin{compactitem}
    \item 
    $\F: \S \times A \Rightarrow 2^\S$ is the set-valued nondeterministic transition function;
    \item 
    $\T_N: \S \times A \times 2^\S \to [0, 1]$ is the transition probability~(or mass assignment) function, i.e., given a set $\Theta\in \mathcal{F}(s, a)$, where $\Theta \subseteq \S$, $\mathcal{T}_N(s, a, \Theta)={\rm Pr}(\Theta|s, a)$.
\end{compactitem}
Note that, in MDPSTs, the transition function $\mathcal{F}(s, a)$ returns a set of state sets, i.e., $\F(s,a) \subseteq 2^\S$, and the transition probability function $\mathcal{T}$ expresses the probability of transitioning to such sets via a given action.


It is worth noting that the problem of MDPSTs with \LTLf objectives is studied for the first time in this paper.
To define the problem, one needs to lift the definition of enforcing \LTLf objectives from MDPs to MDPSTs by defining the notion of robustness for strategies to incorporate the unquantifiable uncertainty. In an MDPST, due to the unquantifiable uncertainty, the 
distribution on the set of reachable successor states of a state-action pair $(s,a)$, where $a \in A(s)$, is not uniquely determined by $(s,a)$, in contrast to MDPs. We denote by ${\rm Post}(s, a)$ the set of reachable states of $(s,a)$, despite both quantifiable and unquantifiable uncertainties. Formally, ${\rm Post}(s, a) = \{s' \mid \exists \Theta\in \mathcal{F}(s, a) \text{ s.t. } \mathcal{T}_N(s, a, \Theta)>0$ and $  s'\in \Theta \}$. A \emph{feasible distribution} of an MDPST guarantees that, given a state action pair $(s,a)$, where $a \in A(s)$, $\emph{(i)}$ only one state is chosen within $\Theta$ for each $\Theta\in \mathcal{F}(s, a)$; $\emph{(ii)}$ the sum of probabilities of selecting a state from ${\rm Post}(s,a)$ equals 1; and $\emph{(iii)}$  the probability of selecting a state out of ${\rm Post}(s,a)$ is 0. 
In the following definition, 
$\iota^{\Theta}_{s'}$ indicates whether $s'$ is \emph{in} ${\Theta}$. Hence $\iota^{\Theta}_{s'}=1$ if $s'\in \Theta$ and $\iota^{\Theta}_{s'}=0$ otherwise. Furthermore, $\alpha_{s'}^{\Theta}$ indicates whether $s'$ is \emph{selected} from ${\Theta}$. Hence $\alpha_{s'}^{\Theta} = 1$ if $s'$ is selected from $\Theta$, and $\alpha_{s'}^{\Theta} = 0$ otherwise.

\begin{definition}[Feasible distribution in MDPSTs]\label{Def:feasibledistribution}
    Let $\mathcal{M}_N=(\S, s_0, A, \mathcal{F}, \mathcal{T}_N, \L)$ be an MDPST, $(s, a)$ a state-action pair, where $a \in A(s)$. $\mathfrak{h}_s^a\in {\rm Dist}(\S)$ is a \emph{feasible distribution} of $(s, a)$, denoted by $s \xrightarrow[]{a} \mathfrak{h}_s^a$, if 
    \begin{compactitem}
        \item[(i)] 
        $\sum_{s'\in \Theta} \alpha_{s'}^{\Theta} =1, \mbox{ for } \Theta\in \mathcal{F}(s, a)$;
        \item[(ii)] $\mathfrak{h}_x^a(s'){=}\sum\limits_{\Theta\in \mathcal{F}(s, a)}\iota^{\Theta}_{s'}\alpha_{s'}^{\Theta} \mathcal{T}_N(s, a, \Theta), 
        \mbox{ for } s'{\in} {\rm Post}(s, a)$; 
        \item[(iii)] $\mathfrak{h}_s^a(s')=0, \mbox{ for } s'\in \S\setminus {\rm Post}(s, a)$.
    \end{compactitem}  
\end{definition}

Following Def.~\ref{Def:feasibledistribution}, it is evident that, in MDPSTs, a feasible distribution is not uniquely determined for a given state-action pair, in contrast to MDPs.
%
It highly depends on $\alpha_{s'}^{\Theta}$~(see item $(ii)$), the unquantifiable uncertainty of whether $s'$ is selected from $\Theta$. We now introduce \emph{nature} for MDPSTs to characterize this unquantifiable uncertainty, motivated by the definition of \emph{nature} in robust MDPs~\cite{nilim2004robust}. One can intuitively perceive nature as the environment in the context of nondeterministic domains, playing a role in resolving nondeterminism. We denote by $\H_s^a$ the set of feasible distributions of state action pair $(s,a)$. Analogously to MDPs, ${\rm FPaths}$ denotes the set of finite paths of an MDPST.

\begin{definition}[Nature for MDPSTs] 
A \emph{nature} of an MDPST is a function
$\gamma_m: {\rm FPaths} \times A \to {\rm Dist}(\S)$ such that, $\gamma(\xi, a)\in \mathcal{H}_s^a$ for $\xi\in {\rm FPaths}$ and $a\in A({\ls}(\xi))$. 
\end{definition}

Suppose we fix a nature $\gamma_m$. The probability of an agent strategy $\sigma_m$ enforcing $\varphi$ is denoted by ${\rm Pr}_{\M_N}^{\sigma_{m}, \gamma_m}(\varphi) := {\rm Pr}_{\M_N}(\{\xi\in \Xi^{\sigma_m, \gamma_m} \mid \pi(\xi^k, \M_N) \models \varphi \text{ for some } k \geq 0\})$,
where $\Xi^{\sigma_m, \gamma_m}$ is the set of all probable paths generated by the agent strategy $\sigma_m$ and nature $\gamma_m$. 
We now define (optimal) robust strategies for MDPSTs, which quantify all natures.  

\begin{definition}[Robust strategy]\label{RobustSat}
    Let ${\mathcal{M}}_N$ be an MDPST, $\varphi$ an \LTLf formula, and $\beta \in [0,1]$ a threshold. An agent strategy $\sigma_m$ \emph{robustly enforces} $\varphi$ in ${\mathcal{M}}_N$ wrt $\beta$, if for every nature $\gamma_m$, the probability of the probably generated paths satisfying $\varphi$ is no less than $\beta$, that is, $P_{{\mathcal{M}}_N}^{\sigma_m} (\varphi)\ge \beta$, where $P_{{\mathcal{M}}_N}^{\sigma_m} (\varphi):={\min}_{\gamma_m}\{{\rm Pr}_{{\mathcal{M}}_N}^{\sigma_m, \gamma_m}(\varphi)\}.$
    Such $\sigma_m$ is referred to as a robust strategy for $\M_N$~(wrt $\beta$).
\end{definition}


\begin{definition}[Optimal robust strategy]
    An optimal strategy $\sigma^*_m$ robustly enforces an \LTLf formula $\varphi$ in an MDPST $\M_N$ is $\sigma_m^*=\arg{\max}_{\sigma_m}\{{\rm Pr}_{{\mathcal{M}}_N}^{\sigma_m} (\varphi)\}.$
     In this case, $\sigma^*_m$ is referred to as an \emph{optimal robust strategy} for $\M_N$.
\end{definition}




The problem of MDPSTs with \LTLf objectives is to compute an optimal robust strategy $\sigma^*_m$, which maximizes the probability of robustly enforcing $\varphi$ in $\M_N$, i.e., achieving the maximal value of $\beta$. Analogously, we can also define an MDPST with simple reachability, i.e., reaching a set of goal states. To avoid repetition, it has been omitted.



\subsection{Solution technique}\label{sec4-3}

The key idea to solve the TH problem for \LTLf planning in nondeterministic domains is to combine the quantifiable action-instruction errors and the unquantifiable adversarial nondeterminism of the domain into an MDPST. This MDPST has the same states as the domain $\N$, and incorporates the action-instruction errors and the domain's adversarial nondeterminism into its transitions. In this case, we reduce $\P^n$ to an MDPST with an \LTLf objective.

\noindent\textbf{Probabilistic abstraction.} 
We build an MDPST $\mathcal{M}_N = (\S, s_0, A, \F, \T_N, \L)$ from $\P^n = (\N, \varphi, \E)$ as follows. $S$, $s_0$, $A$, and $\L$ are the same as in $\N$. To construct the set-valued nondeterministic transition function $\F$ and the mass assignment function $\T_N$, we incorporate the action-instruction errors $\E$ into the probabilistic transitions, much like in deterministic domains. In nondeterministic domains, however, the successor for each state-action pair is not singular due to the adversarial environment behaviour. Consequently, the successor of each probabilistic transition is a set, with environment deciding which element of the set to transit to.

Based on these observations, we can construct the set-valued nondeterministic transition function $\F$ as $\F(s, a) = \cup_{a'\in A(s)} \{F_n(s, a')\}$, such that $\F(s, a)$ is a set of subsets in $\S$,
for $s\in \S$ and $a\in A(s)$. The corresponding mass assignment function $\mathcal{T}_N$ is such that 
    \begin{equation*}\label{ND_T1}
    \mathcal{T}_N(s, a, \Theta)= \begin{cases}
        err(s, a)(a'), & \text{if $a' \in A(s), \Theta=F_n(s, a'),$}\\
        0, & \text{otherwise.}            
    \end{cases}   
\end{equation*}

We now reduce $\P^n$ to an MDPST with an \LTLf objective. 

\begin{theorem}\label{thm:toMDPST}
    Let $\P^n = (\N, \varphi, \E)$ be a problem defined in Def.~\ref{ND}, and $\M_N$ the constructed MDPST described above. An optimal robust strategy for $\M_N$ with $\varphi$ is an optimal strategy for $\P^n$, and vice versa, that is: 
    \begin{equation*}
    \sigma^*_p=\arg\max_{\sigma_m}\{{\rm Pr}_{{\mathcal{M}}_N}^{\sigma_m} (\varphi)\},
    \end{equation*}
    where $\sigma^*_p$ is given in Def.~\ref{ND}, and ${\rm Pr}_{{\mathcal{M}}_N}^{\sigma_m} (\varphi)$ is in Def.~\ref{RobustSat}.
\end{theorem}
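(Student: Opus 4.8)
The plan is to mirror the structure of the proof of Theorem~\ref{thm:toMDP} for the deterministic case, but now carrying along the two orthogonal sources of uncertainty and showing that the ``nature'' of the MDPST plays exactly the role of the adversarial environment strategy $\gamma_p$ together with the selection of a successor from each set-valued transition. First I would record, as in the deterministic proof, that the reduction is identity on the structural components: $\sigma_p = \sigma_m$ by construction, since $\S$, $s_0$, $A$, $\L$ are inherited from $\N$, and the set of finite paths of $\N$ (projected to states and actually-instructed actions) coincides with the set of finite paths of $\M_N$. The crux of the argument is then to exhibit a bijection between environment strategies $\gamma_p$ of $\N$ and natures $\gamma_m$ of $\M_N$ that preserves, for every fixed agent strategy, the induced probability distribution over (finite prefixes of) paths.

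The key step is this correspondence. Fix $s$ and an intended action $a = \sigma_p(\rho')$. On the $\P^n$ side, the trembling hand picks $a' \in \mathrm{supp}(s,a)$ with probability $err(s,a)(a')$, and then the environment, via $\gamma_p$, picks a successor $s' \in F_n(s,a')$. On the MDPST side, $\F(s,a) = \bigcup_{a' \in A(s)} \{F_n(s,a')\}$ and $\T_N(s,a,\Theta) = err(s,a)(a')$ when $\Theta = F_n(s,a')$; a feasible distribution (Def.~\ref{Def:feasibledistribution}) chooses, for each set $\Theta = F_n(s,a') \in \F(s,a)$, a single element $s' \in \Theta$ via the selector $\alpha^{\Theta}_{s'}$, and assigns it mass $\sum_{\Theta} \iota^{\Theta}_{s'} \alpha^{\Theta}_{s'} \T_N(s,a,\Theta)$. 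Thus choosing $\alpha^{F_n(s,a')}_{s'} = 1$ is exactly the choice ``$\gamma_p$ responds to $a'$ with $s'$''. I would make this precise by defining, from a given $\gamma_p$, the nature $\gamma_m(\xi, a)$ to be the feasible distribution whose selectors are $\alpha^{F_n(s,a')}_{\gamma_p(\rho',a')} = 1$ (using the path correspondence to translate $\xi \mapsto \rho'$), and conversely reading off a $\gamma_p$ from the selectors of a nature. One subtlety to address: different actions $a' \ne a''$ may yield the same set $F_n(s,a') = F_n(s,a'')$, in which case $\T_N$ as written is ambiguous; I would note this is handled by summing $err(s,a)(a')$ over all $a'$ with $F_n(s,a') = \Theta$ (or by tagging sets with the action), and that the environment's per-action response $\gamma_p(\cdot,a')$ still gives a well-defined selection consistent with a feasible distribution — this bookkeeping is the main technical obstacle, but it is routine once the tagging convention is fixed.

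Granting the bijection, the remaining step is a standard measure-theoretic induction: for a fixed $\sigma_p = \sigma_m$ and corresponding $\gamma_p \leftrightarrow \gamma_m$, the one-step transition probabilities agree — $\mathrm{Pr}_{\P^n}(s' \mid s, \sigma_p(\rho'), \gamma_p) = \mathfrak{h}^{a}_s(s') = \sum_{a' : s' \in F_n(s,a'),\, \gamma_p(\rho',a') = s'} err(s,a)(a')$ — so by induction on prefix length the cylinder-set probabilities, hence the probability measures on infinite paths, coincide: $\Pr_{\N}^{\sigma_p,\gamma_p,\E} = \Pr_{\M_N}^{\sigma_m,\gamma_m}$ as measures, and in particular they agree on the ($\omega$-regular, measurable) event ``some finite prefix satisfies $\varphi$''. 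Therefore $\Pr_{\N}^{\sigma_p,\gamma_p,\E}(\varphi) = \Pr_{\M_N}^{\sigma_m,\gamma_m}(\varphi)$ for matched pairs.

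Finally I would push the $\min$ and $\max$ through the bijection: since $\gamma_p \mapsto \gamma_m$ is onto, $\min_{\gamma_p} \Pr_{\N}^{\sigma_p,\gamma_p,\E}(\varphi) = \min_{\gamma_m} \Pr_{\M_N}^{\sigma_m,\gamma_m}(\varphi) = P_{\M_N}^{\sigma_m}(\varphi)$ (Def.~\ref{RobustSat}), and then taking $\max$ over $\sigma_p = \sigma_m$ gives $\arg\max_{\sigma_p} \min_{\gamma_p} \Pr_{\N}^{\sigma_p,\gamma_p,\E}(\varphi) = \arg\max_{\sigma_m} \{\Pr_{\M_N}^{\sigma_m}(\varphi)\}$, which is precisely the claimed identity between the optimal strategy for $\P^n$ of Def.~\ref{ND} and the optimal robust strategy of $\M_N$; the ``vice versa'' direction is immediate from the same correspondence. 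I expect the path/measure induction to be entirely routine and the genuine content to lie in verifying that natures and environment strategies are interchangeable — i.e., that the feasibility constraints (i)--(iii) of Def.~\ref{Def:feasibledistribution} are met by the distribution induced by any $\gamma_p$, and that every nature arises this way — which is where the duplicate-successor-set bookkeeping must be handled carefully.
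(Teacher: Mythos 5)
Your proposal is correct and follows essentially the same route as the paper's own proof, which likewise observes that $\sigma_p=\sigma_m$ by construction, identifies the perturbed paths of $\N$ (over all environment strategies) with the paths of $\M_N$ (over all natures), and defers the one-step probability matching to the argument of Theorem~1. You merely spell out in more detail the strategy-level correspondence between environment strategies and natures and the resulting min/max transfer, and your observation that distinct actions with $F_n(s,a')=F_n(s,a'')$ make $\T_N$ ambiguous flags a real bookkeeping point that the paper's construction and proof leave implicit.
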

\begin{proof}
First, we observe that $\sigma_p = \sigma_m$, by construction. Second, one can derive, according to Definitions 4, 5, and 6 that 
$\cup_{\gamma_p}\Phi^{\sigma_p, \gamma_p, \E} = \cup_{\gamma_m}\Xi^{\sigma_{m}, \gamma_m}$ if $\sigma_p = \sigma_m$. 
The rest of the proof can be completed similarly to Theorem 1. 
\end{proof}
Finally, we propose an algorithm to solve the problem of MDPSTs with \LTLf objectives. This algorithm is based on a reduction to an MDPST with simple reachability, i.e., reaching a set of goal states. The algorithm is motivated by existing value iteration algorithms for MDPs with \LTLf objectives~\cite{Wells20}. Essential adaptations, however, are needed to handle the set-valued transitions in MDPSTs.

Given an MDPST $\M_N = (\S, s_0, A, \F, \T, \L)$ and an \LTLf formula $\varphi$, we first construct the corresponding DFA $Aut_\varphi = (2^{Prop}, \Q, q_0, \delta, acc)$ of $\varphi$. Then, the product MDPST $\M_N^\times = (
 \S^\times, s^\times_{0}, A^{\times}, \mathcal{F}^{\times}, \mathcal{T}^{\times}, \L^{\times})$ is constructed accordingly with  $\S^\times=\S\times Q, A^{\times}=A$.
The set of goal states is given by ${acc}^\times = \{(s, q)\in \S \mid q\in acc\}$. 

For the efficiency of strategy synthesis, we further introduce 
an optimization that computes an optimal robust strategy on a sub-MDPST, which only consists of states that are~(forward) reachable from the initial state $s^\times_0$ and~(backward) reachable from the set of accepting states ${acc}^\times$. To do so, we first partition  $\S^\times$ wrt the initial state $s^\times_0$ and the set of accepting states ${acc}^\times$. Specifically, 
let $S_r\subseteq \S^\times$ be the set of states that can be reached from $s^\times_0$. We now partition $\S^\times$ into $\S^\times=S_n\cup S_d\cup S_p$, where $S_n=\S^\times\setminus S_r$ consists of states that cannot be reached from $s_0$, $S_d\subseteq S_r$ consists of states reachable from $s_0$ but that cannot reach any states in ${acc}^\times$, and $S_p=S_r\setminus S_d$ includes those that can be reached from both the initial and accepting states. 

We construct a sub-MDPST $\mathcal{Z} = (\S_p, s_0, A_p, \F_p, \mathcal{T}_p, \L_p)$ from $\M^\times_N$ with respect to $\S_p$ as follows. $A_p = A^\times \cup \{a_\epsilon\}$, where $a_\epsilon$ denotes self-loop action. The set-valued transition function is such that $\F_p(s, a) = \F^\times(s, a), \forall s\in S_p\setminus {acc}^\times$ and $\F_p(s, a_\epsilon) = s, \forall s\in {acc}^\times$. 
The mass assignment function $\mathcal{T}_p$ is then given by i) $\mathcal{T}_p(s, a, \Theta)=\mathcal{T}^{\times}(s, a, \Theta), \forall \Theta\in \F_p(s, a)$ if $s\in S_p\setminus {acc}^\times, a\in A^\times$, and ii) $\mathcal{T}_p(s, a_\epsilon, s)=1$ if $s\in {acc}^\times$. 

Define a value function $V_{\mathcal{Z}}: \S_p \to \mathbb{R}_{\ge 0}$ by $V_{\mathcal{Z}}(s)
    =\max_{\sigma_m}\min_{\gamma_m} \{{\rm Pr}_{\mathcal{Z}}^{\sigma_m, \gamma_m}({acc}^\times)\},$ 
which represents the maximal probability of reaching ${acc}^\times$ from $s$. Then one can get that $V_{\mathcal{Z}}(s)=1, \forall s\in {acc}^\times$. For $s\in S_p\setminus {acc}^\times$, the Bellman principle of optimality is \cite{satia1973markovian}:
\begin{equation}\label{VF}
    V_{\mathcal{Z}}(s)
    =\max_{a\in A_p(s)}\min_{{\rm Pr}(\cdot| s, a)\in \H_s^a} \Big\{\sum_{s'\in \S_p} {\rm Pr}(s'| s, a)V_{\mathcal{Z}}(s')\Big\}.   
\end{equation}

Moreover, it was further shown in \cite{trevizan2007planning} that a simplified Bellman equation exists for MDPSTs. That is, one can safely pull the $\min$ operator inside the summation, which gives a more efficient variant
\begin{equation}\label{VF_deterministic}
\begin{aligned}
    V_{\mathcal{Z}}(s)
    =\max_{a\in A_p(s)}\Big\{\sum_{\Theta\in \mathcal{F}_p(s, a)}\mathcal{T}_p(s, a, \Theta)
    \min_{s'\in \Theta}\{V_{\mathcal{Z}}(s')\}\Big\}.
    \end{aligned}
\end{equation}

An optimal robust strategy $\sigma_m^*$ can be derived from $V_{\mathcal{Z}}$. Analogously to Sec.~\ref{sol:THDet}, due to cross product, every strategy $\sigma_m$ for $\M_N^\times$ wrt $acc^\times$ induces a strategy for $\P^n = (\N, \varphi, \E)$. Together with Thm.~\ref{thm:toMDPST}, we have the following.

\begin{theorem}\label{thm:toMDPSTfinal}
Let $\P^n = (\N, \varphi, \E)$ be as defined in Def. \ref{ND}, $\M_N^\times = \M_N \times Aut_{\varphi}$ the constructed product MDPST, and $acc^\times$ the set of goal states. The computed optimal robust strategy $\sigma^*_m$ of $\M_N^\times$ with reachability goal $acc^\times$ induces an optimal strategy $\sigma^*_p$ for $\P^n$.
\end{theorem}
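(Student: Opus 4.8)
The plan is to obtain Thm.~\ref{thm:toMDPSTfinal} by composing Thm.~\ref{thm:toMDPST} --- which already reduces $\P^n$ to computing an optimal robust strategy for $\M_N$ against the \LTLf objective $\varphi$ --- with a faithfulness argument for the product construction $\M_N^\times = \M_N \times Aut_\varphi$ that simultaneously preserves the probabilistic structure, the set-valued nondeterminism, and the $\max$--$\min$ interaction between the agent and nature. The first ingredient is the usual DFA-tracking invariant: along any path of $\M_N$ with state sequence $s_0 s_1 \cdots$, the matching path of $\M_N^\times$ has automaton component $q_{i+1} = \delta(q_i, \L(s_i))$, hence $q_{k+1} = \delta^*(q_0, \L(s_0)\cdots\L(s_k))$. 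Consequently $\pi(\xi^k, \M_N)\models\varphi$ iff $q_{k+1}\in acc$ iff the product path visits $acc^\times = \{(s,q) : q \in acc\}$ at step $k+1$, so the \LTLf objective on $\M_N$ corresponds exactly to the reachability objective $acc^\times$ on $\M_N^\times$. Because $Aut_\varphi$ is \emph{deterministic}, the automaton component of a product state is a function of the domain history alone; therefore the product introduces no new choices for the agent and no new choices for nature, and $\F^\times$, $\T^\times$, and the feasible-distribution sets $\H_s^a$ of $\M_N^\times$ are in bijection --- via the projection that forgets the $q$-component --- with those of $\M_N$.

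Using this, I would next establish a measure-preserving correspondence: an agent strategy $\sigma_m$ and a nature $\gamma_m$ on $\M_N$ lift to $\sigma_m^\times, \gamma_m^\times$ on $\M_N^\times$ (and conversely any $\sigma_m^\times, \gamma_m^\times$ project down), and the induced path distributions agree under the projection, so ${\rm Pr}_{\M_N}^{\sigma_m, \gamma_m}(\varphi) = {\rm Pr}_{\M_N^\times}^{\sigma_m^\times, \gamma_m^\times}(acc^\times)$. Taking $\min$ over natures and then $\max$ over agent strategies on both sides --- legitimate because the lifting/projection bijection commutes with both quantifiers --- yields equality of the optimal robust values of $(\M_N, \varphi)$ and $(\M_N^\times, acc^\times)$, and a bijection between their optimal robust strategies. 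Combined with Thm.~\ref{thm:toMDPST}, this shows that an optimal robust strategy of $\M_N^\times$ with goal $acc^\times$, read back along the product, is an optimal strategy for $\P^n$.

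It remains to justify that working on the sub-MDPST $\mathcal{Z}$ over $\S_p$ loses nothing and that $V_{\mathcal{Z}}$ is computed correctly. States in $S_n$ are unreachable from $s_0^\times$ and carry zero probability mass under every strategy and nature, and no transition from a reachable state can enter $S_n$; states in $S_d$ cannot reach $acc^\times$ at all, so they behave as value-$0$ sinks and routing probability through them can only lower the reachability value --- hence the adversarial nature loses nothing by the pruning and the agent's optimum is unchanged. Adding the self-loop action $a_\epsilon$ at accepting states does not affect whether $acc^\times$ is reached, so $V_{\mathcal{Z}}(s_0^\times)$ equals the optimal robust reachability value of $\M_N^\times$. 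That value is characterized by the Bellman optimality equation~\eqref{VF} and, by the MDPST simplification of~\cite{trevizan2007planning}, by the more efficient~\eqref{VF_deterministic}; value iteration on~\eqref{VF_deterministic} converges to $V_{\mathcal{Z}}$ and the maximizing actions define $\sigma_m^*$. Finally, exactly as in Sec.~\ref{sol:THDet}, the product path correspondence lets us set $\sigma_p^*(\rho) = \sigma_m^*(\rho^\times)$ for finite paths $\rho$ of $\N$, and Thm.~\ref{thm:toMDPST} certifies that $\sigma_p^*$ is optimal for $\P^n$.

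I expect the main obstacle to be the careful bookkeeping in the second step. Unlike the plain-MDP reduction of Thm.~\ref{thm:toMDPfinal}, where the objective-to-reachability translation only touches a purely probabilistic transition system, here one must check that the $\max_{\sigma_m}\min_{\gamma_m}$ structure survives the product --- i.e.\ that lifting a nature of $\M_N$ to $\M_N^\times$ and projecting back are mutual inverses whose images are exactly the feasible-distribution sets on each side, which is precisely where determinism of $Aut_\varphi$ is used --- and that the pruning of $S_n \cup S_d$ is sound for the \emph{robust} (adversarial-nature) value rather than merely for a fixed nature.
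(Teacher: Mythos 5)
Your proposal is correct and follows essentially the same route the paper takes: reduce the \LTLf objective to reachability of $acc^\times$ via the product with the (deterministic) DFA, argue the strategy/nature correspondence survives the product and the pruning to $\S_p$, and then invoke Thm.~\ref{thm:toMDPST} to transfer optimality back to $\P^n$. The paper itself offers no explicit proof---it states the theorem as an immediate consequence of the product construction and Thm.~\ref{thm:toMDPST}---so your write-up is in fact a more careful elaboration of the intended argument, correctly identifying the two points that genuinely need checking (that the $\max$--$\min$ structure commutes with the lifting/projection, and that pruning $S_n \cup S_d$ is sound for the robust value).
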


\begin{remark}
Let $\bar{\bm{F}} = \max_{s\in S_p}\{\max_{a\in A_p(s)}|\F_p(s, a)|\}$ be an upper bound of $|\F_p(s, a)|$ for all $(s, a) \in S_p\times A_p$. Let $\epsilon$ be the convergence precision of the value iteration (i.e., Eqn. (\ref{VF_deterministic})).
The complexity for achieving an $\epsilon$-suboptimal solution is $\mathcal{O}(|S_p|^2|A_p|\bar{\bm{F}}\log \frac{1}{\epsilon})$.
We highlight that MDPSTs are a subclass of MDPIPs, and they admit more efficient strategy synthesis algorithms than general MDPIPs. This is because for general MDPIPs, each round of value iteration of its corresponding Bellman equation (Eqn. (\ref{VF})) involves two stages: 1) the inner minimization problem
is solved (e.g., using a bisection algorithm) for each state and each action
and 2) the value function is updated with  dynamic programming. 
\end{remark}

\section{Implementation and Experimental Results}

\begin{figure*}[th]
     \centering
    \includegraphics[width=.8\linewidth]{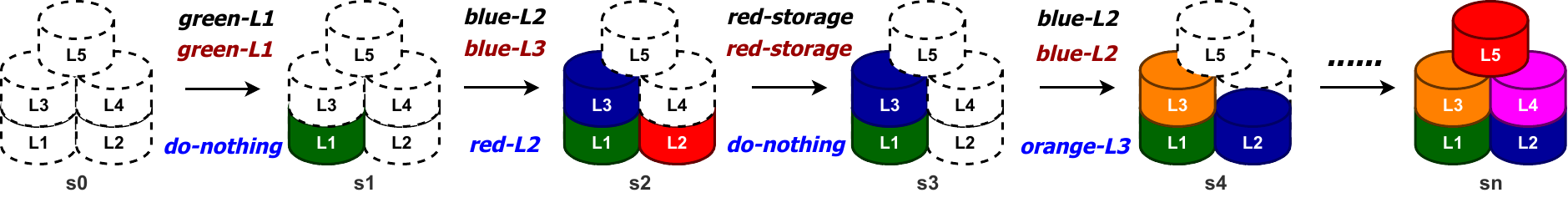}
         \caption{An execution example of an optimal strategy for the arch-building task. Robot-intended actions, robot-executed actions, and human interventions are shown in black, brick, and blue, respectively.}
    \label{fig:validation}
\end{figure*}

We implemented the solution technique described in Sec.~4, which subsumes the method described in Sect.~3,
in Python, and use LYDIA~\cite{de2021lydia} for \LTLf-to-DFA construction. The implementation details of our algorithms and experiments can be found on GitHub: \href{https://github.com/piany/Tremblinghand_LTLf}{https://github.com/piany/Tremblinghand\_LTLf}.

In this section, we present a case study to demonstrate the effectiveness of the proposed method. 
Our case study is based on the human-robot co-assembly problem, described in Example \ref{example2}. The ``trembling-hand" robot aims to stack blocks to a certain configuration with unpredictable human interventions. Note that we assume the human only has a limited number $K$ of moves (otherwise, the robot has no way to guarantee task completion \cite{he2019efficient}.). In particular, we consider the configurations of having certain objects in certain locations. The goal configurations for $N$ blocks are depicted in Figure~\ref{Fig:arch}, involving objects ranging from 2 to 6~(i.e., $|OBJ|\in \{2, 3, \cdots, 6\}$). 


\noindent\textbf{Implementation.}
The key challenge in the implementation is an effective representation of the planning domain with error actions. On one hand, it impacts the efficiency for model construction.
As shown in~\cite{WellsKLKV21}, the state space of the co-assembly problem grows exponentially in the number of objects. Therefore, it is extremely challenging to build a tractable model for a large number of objects. On the other hand, the efficiency of strategy synthesis is also impacted, as discussed in Remark 1.


In~\cite{WellsKLKV21}, three different choices are examined to encode the state space, integer encoding~(states are enumerated by breadth-first search), object encoding~(use tuples, e.g., $(1, 2, 0)$, mapping each object to its location), and location encoding~(using tuples, e.g., $(0, 1, 1, 0)$, mapping each location to the number of objects therein), where location encoding shows the best overall performance. 
We follow this observation and use location encoding for state-space representation. Note, however, that in our case a single tuple is not enough to represent a state. This is due to the one-to-one correspondence between objects and locations in the goal configuration. For example, in the configuration involving 5 objects~(where the goal configuration is shown on the upper right of Figure \ref{Fig:arch}), if the locations of the green and blue blocks are interchanged, the task is deemed incomplete. 
To resolve this issue, we use a tuple of tuples for state encoding. For instance, a state $((0, 0, 0, 1), (1, 0, 0, 0), (0, 1, 0, 0))$ means that $Obj_1$ is at location 3, $Obj_2$ is at the storage~(the first element in each tuple represents storage), and $Obj_3$ is at location 1. Therefore, given $N$ objects, the co-assembly domain has at most $2^{N(N+1)}$ states. Luckily, many of the states are invalid due to physical constraints, e.g., one block cannot be at two different locations, and each location (except for storage) can have at most one block. Using these physical constraints, we design a recursive algorithm (to avoid enumerating all possible states, hence greatly improving efficiency) to prune the co-assembly state space such that only valid states are maintained. The details of the state pruning algorithm can be found in Appendix. 

Moreover, the $K$-move limitation of the human should also be considered. We use a counter $\C = \{0, 1, \cdots, K\}$ to record the human moves. Together with the (pruned) state-space $X$ of domain $\N = (\S, s_0, A, F_n, \L)$, i.e., $X \subseteq \S$, we obtain an augmented state-space $\S' = X \times \C$. 
If the counter value is $K$, then the set of applicable human actions is restricted to ${\textsc{Do-nothing}}$ for all $s \in \S'$. Finally, we can construct the probabilistic abstraction of the co-assembly problem in the form of an MDPST over $\S'$ following Sec.~\ref{sec4-3}.



Figure~\ref{fig:validation} depicts a~(simplified) execution example of an optimal strategy for an arch-building task with 5 blocks at 5 certain locations~(see the right-most arch).
All blocks are in storage~(state $s0$) at the beginning. Then, the robot intends to put the green block at $L1$, which succeeds without any interference from the human~(state $s1$). Next, the robot intends to finish building the base level by putting the blue block at $L2$. However, due to the trembling hand, the blue block was put to $L3$. Furthermore, the human put the red block at $L2$ to prevent the robot from building the arch~(state $s2$). Note that the red block is supposed to be at $L5$~(see the right-most arch). In this case, the robot intends to remove the red block, which succeeds~(state $s3$). After a finite number of executions, in spite of the trembling hand and the interventions from the human, the robot builds the arch~(state $sn$).

\noindent\textbf{Experimental results.}
In our experiments, the convergence precision for the value iteration in Eqn.~(\ref{VF_deterministic}) was set to $10^{-3}$. All experiments were carried out on a Macbook Pro (2.6 GHz 6-Core Intel Core i7 and 16 GB of RAM). 

We first show the effectiveness of the state-pruning technique. For the case that $K = 3$, the number of states and transitions in the constructed MDPST are shown in Figure \ref{Fig:state_transition} for different numbers of objects ($2\le |OBJ|\le 6$). It is worth noting that six is the maximum number of objects that have been considered in the literature~\cite{WellsKLKV21} (due to the exponential blowup in the number of states). It is evident that the state space, post-pruning, exhibits much slower growth compared to exponential expansion.


\vspace{-3mm}
    \begin{figure}[H]
\centering
\includegraphics[width=0.36\textwidth]{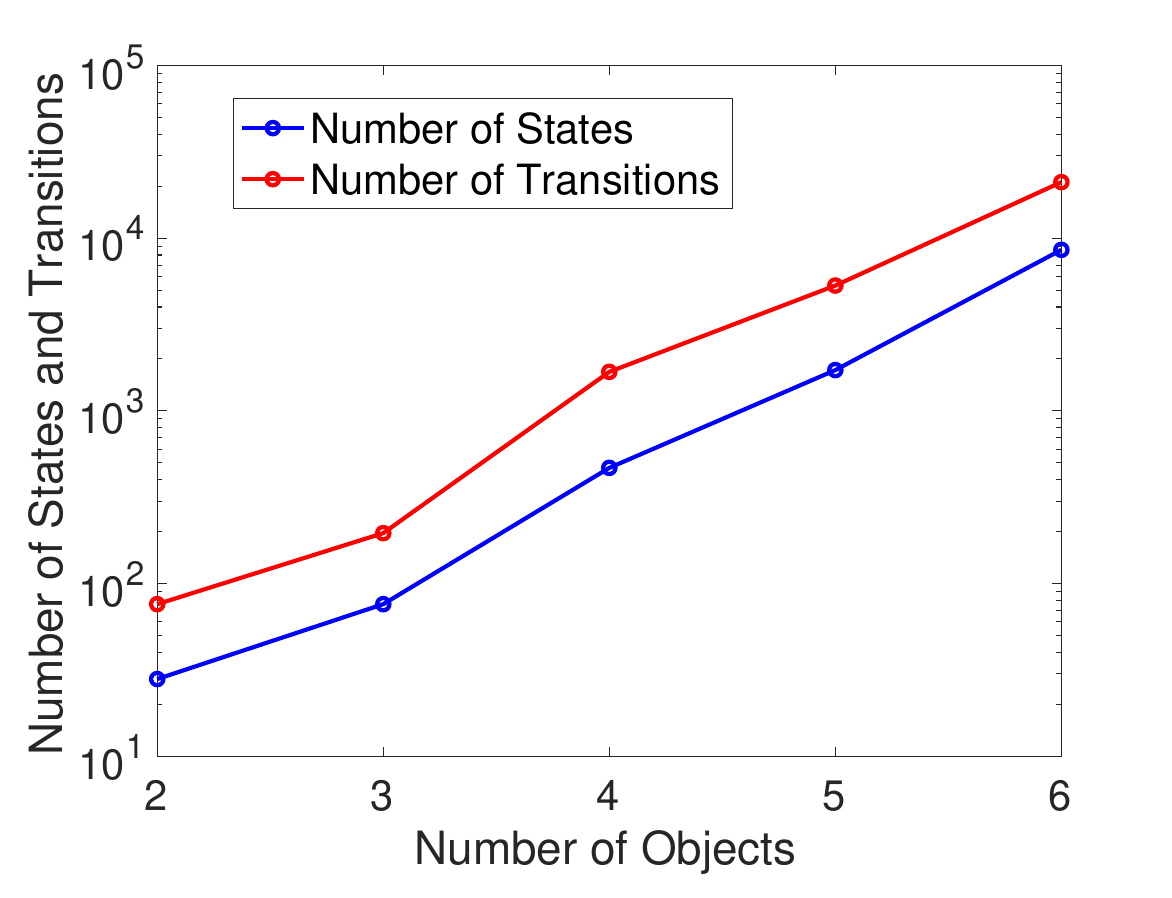}
	\caption{Number of states and transitions in the constructed MDPST for $2\le |OBJ|\le 6$ (in log scale). }
	\label{Fig:state_transition}
\end{figure}

\vspace{-8mm}
    \begin{figure}[H]
\centering
\includegraphics[width=0.36\textwidth]{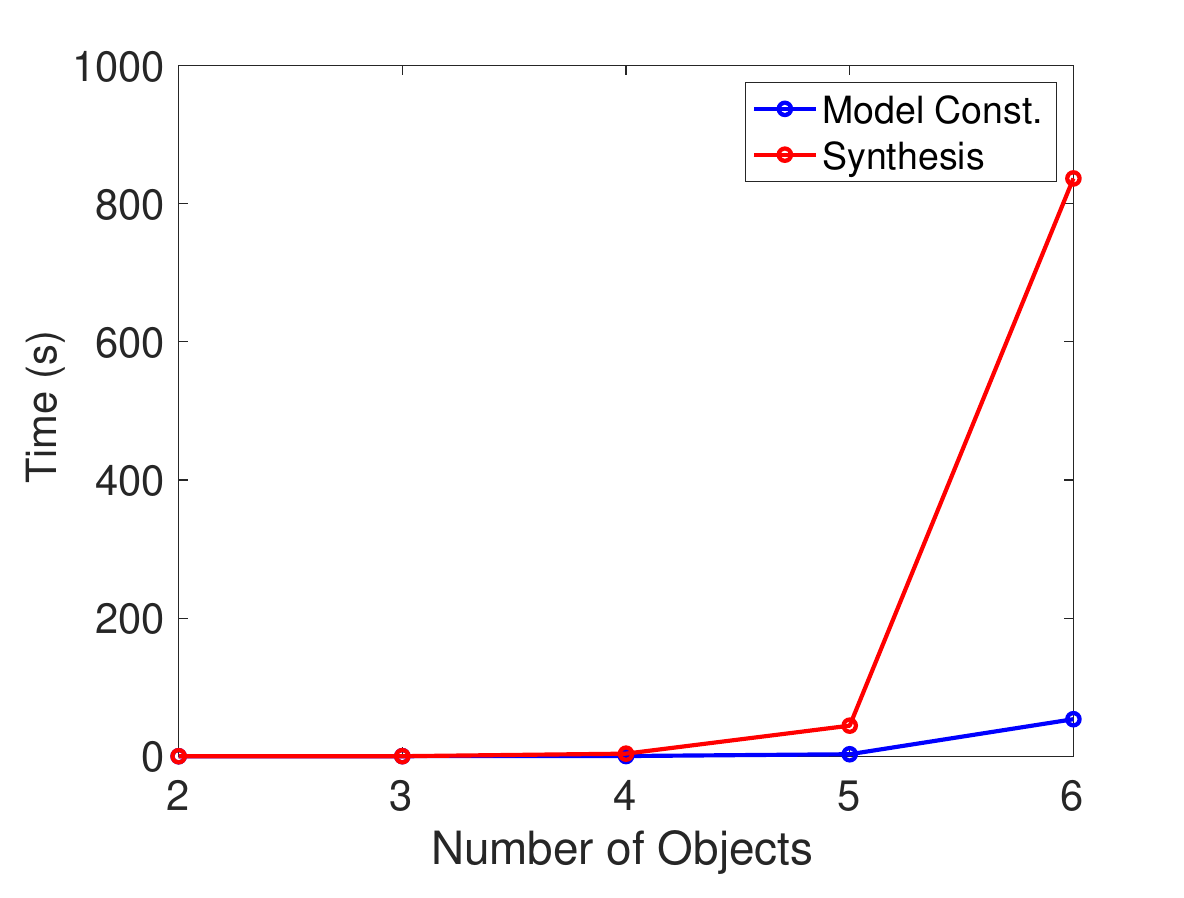}
	\caption{Computation time for model construction and strategy synthesis for $2\le |OBJ|\le 6$. }
	\label{Fig:computation_time}
\end{figure}

In addition, the computation time for model construction (including state-space pruning and MDPST construction) and strategy synthesis (including DFA construction, product MDPST computation, and robust value iteration of Eqn. (\ref{VF_deterministic})) with respect to different number of objects ($2\le |OBJ|\le 6$) are depicted in Figure \ref{Fig:computation_time}. Note that \LTLf synthesis in nondeterministic domains is 2EXPTIME-complete in the size of the \LTLf formula and EXPTIME-complete in the size of the domain \cite{de2023ltlf}. Moreover, the probabilistic behaviour of the agent~(caused by the ``trembling hand") further complicates the synthesis problem.
Nevertheless, one can see that, for the case of six objects, the overall problem (including model construction and strategy synthesis) can still be solved quite efficiently using the proposed algorithm.

For the case of 5 objects, we further explore situations where the upper bounds on human interventions range from 3 to 8 (i.e., $K\in \{3, 4, \cdots, 8\}$). 
The experimental results are shown in Table 1, where the size of the problem, i.e., the number of states and transitions, and the computation
time for MDPST construction and strategy synthesis with respect to $K$, are listed. It is clear that, although both the number of states and transitions grow linearly as $K$ increases~(as does the model construction time), the synthesis time grows faster. This is because computing an $\epsilon$-suboptimal solution for an MDPST $\M_N = (
 \S, s_{0}, A, \mathcal{F}, \mathcal{T}_N, \L)$, which consists of reachable states only~(following the partition optimization described in Section~4.3), is $\mathcal{O}(|\S|^2|A|\bar{\bm{F}}\log \frac{1}{\epsilon})$.

\begin{table}[t]
	\centering
	\caption{Model construction and synthesis computation times for
5 objects.}  	\label{Table_comtime} 
\begin{tabular}{l|l|l|l|l}
\hline
\multicolumn{1}{c|}{$K$} & \multicolumn{1}{c|}{States} & \multicolumn{1}{c|}{Transitions} & \multicolumn{1}{c|}{\begin{tabular}[c]{@{}c@{}}Model Const.\\ (s)\end{tabular}} & \multicolumn{1}{c}{\begin{tabular}[c]{@{}c@{}}Synthesis\\ (s)\end{tabular}} \\ \hline\hline
3 &1724 &5324 &3.222 &45.81 \\ \hline
4 &2155 &6655 &4.685 &78.820 \\ \hline
5 &2586 &7986 &6.958 &132.42 \\ \hline
6 &3017 &9317 &9.283 &190.203 \\ \hline
7 &3448 &10648 &12.383 &263.826 \\ \hline
8 &3879 &11979 &15.768 &347.015 \\ \hline
\end{tabular}
\end{table}
\section{Conclusions}

In this paper, we have investigated the trembling hand problem for \LTLf planning in deterministic and nondeterministic domains. We formulate the problem formally by defining action-instruction errors and perturbed paths influenced by these errors. For the case of deterministic domains, we show that the problem can be reformulated as an MDP with an \LTLf objective, leveraging existing algorithms for synthesis. In the case of nondeterministic domains, on the other hand, we utilise 
MDPSTs with \LTLf objectives, for which we propose an efficient robust value iteration algorithm for synthesis. In particular, MDPSTs with \LTLf objectives have been studied here for the first time. 
We also demonstrate the promising scalability of the proposed algorithm in a case study. For future work, we plan to leverage symbolic techniques for synthesis, aiming to improve efficiency.

\section*{Appendix: State Pruning Algorithm}


\begin{algorithm}[t]
\caption{\textit{State Pruning}}
\caption{\textit{State Pruning}}
\begin{algorithmic}[1]
\Require the number of objects $N$.
\Ensure the set of valid states $S_{valid}(N)$.
\For {$i = 2, \cdots, N$}
\If {$i =2$}
\State $\mathbb{S} \leftarrow \textit{CreateStates}(i, i+1)$
\For {$s \in \mathbb{S}$}
\If {$\textit{RowsSumIsOne}(s)$ and $\textit{ColsSumIsOne}(s)$}
\State add $s$ into $S_{valid}(i)$
\EndIf
\EndFor
\EndIf
\If {$i \ge 3$}
\State $r_{valid} \leftarrow \textit{CreateRow}(i)$
\For {$s_{i-1} \in S_{valid}(i-1)$}
\For {$r(k)$ in $s_{i-1}$}
\State $r_{new}(k) \leftarrow \{r(k) + (0), (0, \cdots, 0, 1)\}$
\EndFor
\State $S_{comb} \leftarrow (r_{new}(0), \cdots, r_{new}(i-2), r_{valid})$
\For {$s_{expand}$ in $\textit{Product}(S_{comb})$}
\If {$\textit{RowsSumIsOne}(s_{expand})$ and $\textit{ColsSumIsOne}(s_{expand})$}
\State add $s_{expand}$ to $S_{valid}(i)$
\EndIf
\EndFor
\EndFor
\EndIf
\EndFor
\end{algorithmic}
\end{algorithm}

Given $N$ objects, the co-assembly domain has at most $2^{N(N+1)}$ states. To obtain an effective representation of the co-assembly domain, we design a state pruning algorithm~(Algorithm 1), which takes into account the physical constraints: (1) each block cannot be at more than one location, and (2) each location (except for storage) can have at most one block. In this way, only a set of valid states that satisfy these constraints are maintained for strategy synthesis. Note that Algorithm 1 employs a recursive approach, strategically avoiding the enumeration of all possible states. This recursive nature enhances efficiency.

Algorithm 1 takes as input the number of objects $N$ and outputs the set of valid states $S_{valid}(N)$. It starts by computing $S_{valid}(2)$ for 2 objects (lines 2-9),  followed by the recursive computation of $S_{valid}(\geq 3)$ through $S_{valid}(N)$ (lines 10-23). When $i =2$, we first create all possible states $\mathbb{S}$ using $\textit{CreateStates}(i, i+1)$, where each state is a $2\times 3$ tuple and each element of the tuple is either 0 or 1 (line 3). Then for a state $s\in \mathbb{S}$, if all the rows sum equal to 1~(checked by $\textit{RowsSumIsOne}$) and all the columns~(except for column 0, which represents storage) sum equal to 1 (checked by $\textit{ColsSumIsOne}$), then $s$ is added to $S_{valid}(2)$ (lines 4-8). 

When $i\ge 3$, we compute $S_{valid}(i)$ by expanding each state $s_{i-1}\in S_{valid}(i-1)$ (which is a $(i-1) \times i$ tuple) into a set of valid states $s_i$ (which is a $i \times (i+1)$ tuple). In line 11, sub-algorithm $\textit{CreateRow(i)}$ returns a set of $1\times (i+1)$ tuples whose row sum equals to 1 (i.e., $r_{valid}$). In lines 13-15, we expand each row $r(k)$ of $s_{i-1}$ into a set of two rows, i.e., $r_{new}(k)$, by adding one element (either 0 or 1) to the end of $r(k)$. Then, in line 17, sub-algorithm $\textit{Product}(\cdot)$ computes a set of tuples containing all possible combinations of elements from $r_{new}(0), \cdots, r_{new}(i-2)$ and $r_{valid}$ (i.e., $S_{comb}$). For each $s_{expand}$ in $S_{comb}$, if all the rows sum equal to 1  and all the columns sum (except for column 0) equal to 1, the state $s_{expand}$ is added to $S_{valid}(i)$ (lines 17-21).

\section*{Acknowledgements}

This work was supported in part by the ERC ADG FUN2MODEL
(No. 834115), in part by the ERC ADG WhiteMech (No. 834228), in part by the EU ICT-48 2020 project TAILOR (No. 952215), in part by NSF grants IIS-1527668, CCF-1704883,
IIS-1830549, CNS-2016656, DoD MURI grant N00014-20-1-2787,
and in part by an award from the Maryland Procurement Office.

\bibliographystyle{named}
\bibliography{references,ref}

\end{document}